\documentclass[conference]{IEEEtran}
\IEEEoverridecommandlockouts
\usepackage[T1]{fontenc}
\usepackage{newtxtext}
\usepackage{newtxmath}
\usepackage{cite}
\usepackage{amsmath}
\usepackage{graphicx}
\usepackage{booktabs}
\usepackage{multirow}
\usepackage{xcolor}
\usepackage{tikz}
\usepackage{pgfplots}
\pgfplotsset{compat=1.18}
\usetikzlibrary{arrows.meta,positioning,calc}
\usepackage{url}
\usepackage{textcomp}
\usepackage{algorithm}
\usepackage{algpseudocode}
\usepackage[colorlinks=true,linkcolor=black,citecolor=black,urlcolor=linkblue,bookmarks=false]{hyperref}

\input{numbers.tex}

\definecolor{slotblue}{HTML}{2A78D6}
\definecolor{slotorange}{HTML}{EB6834}
\definecolor{slotaqua}{HTML}{1BAF7A}
\definecolor{slotyellow}{HTML}{EDA100}
\definecolor{slotmagenta}{HTML}{E87BA4}
\definecolor{slotviolet}{HTML}{4A3AA7}
\definecolor{linkblue}{HTML}{1C5CAB}
\definecolor{inkmuted}{HTML}{52514E}
\definecolor{gridgray}{HTML}{E4E3DF}
\definecolor{edgegray}{HTML}{B9B7B0}

\newcommand{\rift}{RIFT}
\newtheorem{proposition}{Proposition}
\newtheorem{lemma}{Lemma}
\begin{document}
\bstctlcite{IEEEexample:BSTcontrol}

\title{RIFT: Token Influence Follows\\the Path of Least Resistance}

\author{\IEEEauthorblockN{Tingan Jin}
\IEEEauthorblockA{University of California, Los Angeles\\
Los Angeles, CA, USA}
\and
\IEEEauthorblockN{Shuhang Dong}
\IEEEauthorblockA{University of California, Los Angeles\\
Los Angeles, CA, USA}
\and
\IEEEauthorblockN{Haosong Li}
\IEEEauthorblockA{Independent Researcher}}

\maketitle

\begin{abstract}
Which context tokens drive a language model's answer? Graph-based tools such as attention rollout and attention flow answer this from a single forward pass, yet their faithfulness in decoder-only models is rarely tested against interventions. We view the layers of a decoder as a causal network whose conductances are attention weights, and read a token's influence on the answer from the resistance between them. We propose \rift{} (Resistance-based Information Flow in Transformers), which scores a token by the series resistance of its \emph{path of least resistance} to the answer, found exactly by a classic shortest-path dynamic program. On synthetic in-context retrieval tasks and extractive question answering over Wikipedia paragraphs, \rift{} predicts the effect of replacing a token with a Spearman correlation of \val{C1-plr-min}--\val{C1-plr-max}; predicting a single replacement, it reaches \val{C1-lfl-min}--\val{C1-lfl-max}\% of the agreement between two different random replacements. It needs no gradients and no extra forward passes, scoring every token of a prompt in \val{cost-rift-min-ms}--\val{cost-rift-max-ms}\,ms on one CPU thread, whereas the replacement experiment needs a forward pass per token. On \val{C1-total-prompts} held-out prompts from four models of two families (Qwen3 and Llama 3.2), it outperforms token distance, attention rollout and attention flow in every pre-specified comparison, by \val{C1-gapu-min} to \val{C1-gapu-max} in mean Spearman correlation, and it is the best or tied-best of all attention-based scores we test. Aggregates that add up all routes (attention flow, effective resistance and rollout) instead mostly encode token position: as a simple proposition shows, early tokens have many more parallel routes to the answer.
\end{abstract}

\begin{IEEEkeywords}
explainable AI, token attribution, attention flow, shortest path, effective resistance, large language models
\end{IEEEkeywords}

\section{Introduction}

When a language model answers a question about its context, which context tokens did the answer depend on? Token-level attribution supports debugging, auditing and trust in systems built on large language models (LLMs); in retrieval-augmented and knowledge-grounded question answering, it tells which retrieved facts an answer actually used. Gradient-based and perturbation-based attributions answer the question for a given output, but require backward passes or many forward passes~\cite{ancona2018gradients,meng2022rome,zhang2024patching,cohenwang2024contextcite}. A popular cheaper family reads influence directly off the attention graph of one forward pass: \emph{attention rollout} and \emph{attention flow}~\cite{abnar2020flow}, together with contribution-aware refinements~\cite{kobayashi2020norms,ferrando2022alti,ferrando2024routes}. These tools are routinely used to visualize what a model ``looks at'', but in decoder-only LLMs their agreement with causal effects has rarely been measured, and the few existing checks report moderate to weak agreement for rollout~\cite{pramanik2026heta,lan2025macs}.

We revisit the question through an electrical analogy. Unrolling a decoder over its layers gives a causal network: each node is a token after a layer, and each edge carries the weight with which one token reads another, plus the residual connection. A source token's influence on the answer can then be read from the \emph{resistance} between the source and the answer position. Two classical notions of resistance lead to two families of scores (Fig.~\ref{fig:schematic}). The \emph{effective} resistance lets current spread over all routes; like attention flow and rollout, it aggregates many paths. The \emph{path of least resistance} instead asks how easy the single best route is. Our method, \rift{} (Resistance-based Information Flow in Transformers), scores each token by this path. Finding it is a classic shortest-path search, solved exactly by dynamic programming, so \rift{} brings a tool of classic AI to the interpretation of generative models.

The two families behave very differently, which is the main message of this paper:
\begin{itemize}
\item \textbf{\rift{} predicts causal influence.} \rift{} reaches a Spearman correlation of \val{C1-plr-min}--\val{C1-plr-max} with the effects of token-replacement interventions. It predicts the effect of a single replacement nearly as well as a second, different random replacement of the same token does (\val{C1-lfl-min}--\val{C1-lfl-max}\% of that agreement). On \val{C1-total-prompts} held-out prompts from four models of two families, including extractive question answering on Wikipedia text, it beats token distance, attention rollout and attention flow in every pre-specified comparison, by \val{C1-gapu-min} to \val{C1-gapu-max} in mean Spearman correlation.
\item \textbf{\rift{} is the best attention-based score we test, or tied for best.} It is best or statistically tied for best in every dataset. Its single-route sibling, the widest path, ties it in \val{C1-wid-tie-of} datasets (Fig.~\ref{fig:main}); the strongest single attention weight and a multiplicative path score trail it in most datasets.
\item \textbf{All-path aggregates track position, not content.} In the Qwen3 models, attention flow ranks tokens exactly from earliest to latest in \val{Q-flow-exact} of \val{Q-flow-n} prompts, and in Llama its Spearman correlation with token proximity (inverse distance to the answer) is still \val{L-pos-flow-min} to \val{L-pos-flow-max}; negative values mean that earlier tokens score higher. Across the confirmation datasets, effective resistance (\val{C1-pos-kirch-min} to \val{C1-pos-kirch-max}) and rollout (\val{C1-pos-roll-min} to \val{C1-pos-roll-max}) follow position too.
\item \textbf{\rift{} is cheap and interpretable.} It needs no gradients and no forward pass beyond the one that produced the answer: one sweep of a dynamic program over the attention weights, from the last layer down, scores every token of a prompt in \val{cost-rift-min-ms}--\val{cost-rift-max-ms}\,ms on one CPU thread, whereas measuring replacement effects takes a forward pass per token (Fig.~\ref{fig:cost}). The best route from the token that holds the answer reaches the answer position in a single jump in \val{route-single-min}--\val{route-single-max}\% of prompts, for Qwen3 on the synthetic tasks mostly at one layer (\val{route-q-mode-pct}\% of prompts), pointing to a candidate read-out layer.
\end{itemize}
Our contributions are (i) a resistance view of token influence that casts rollout, attention flow and effective resistance as all-path aggregates, and \rift{}, a score based on the path of least resistance; (ii) a causal evaluation protocol with a label-reliability reference and a pre-specified, held-out confirmation; (iii) evidence, on synthetic and natural text, that \rift{} predicts causal influence far better than the standard baselines, at a cost of milliseconds per prompt; and (iv) an analysis of why all-path aggregates favor early tokens in decoder LLMs.

\begin{figure}[t]
\centering
\begin{tikzpicture}[x=0.62cm,y=0.78cm,
  node/.style={circle,draw=inkmuted,fill=white,inner sep=0pt,minimum size=4.2pt,line width=0.4pt},
  lbl/.style={font=\scriptsize,text=inkmuted}]
  \foreach \l in {0,1,2}{
    \foreach \a in {1,...,8}{
      \foreach \b in {1,...,8}{
        \ifnum\b<\a\relax\else\draw[edgegray,line width=0.4pt] (\a,\l) -- (\b,\l+1);\fi
      }
    }
  }
  \draw[slotblue,line width=1.6pt] (6,0) -- (6,1) -- (8,2) -- (8,3);
  \foreach \x in {1,...,8}{\foreach \y in {0,1,2,3}{\node[node] at (\x,\y) {};}}
  \node[node,fill=slotblue,draw=slotblue] at (6,0) {};
  \node[node,fill=black] at (8,3) {};
  \node[node,fill=inkmuted,draw=inkmuted] at (1,0) {};
  \foreach \y/\t in {0/{$\ell=0$},1/{$1$},2/{$2$},3/{$L$}}{\node[lbl,anchor=east] at (0.55,\y) {\t};}
  \node[lbl,anchor=north] at (1,-0.2) {early token};
  \node[lbl,anchor=north,text=slotblue] at (6,-0.2) {relevant token};
  \node[lbl,anchor=south] at (8,3.15) {answer position $t$};
  \node[lbl,anchor=north,align=center] at (4.5,-0.65) {token position $\rightarrow$};
\end{tikzpicture}
\caption{The resistance view. A decoder unrolled over layers is a causal network; node $(\ell,i)$ is token $i$ after layer $\ell$, and edges carry attention mixing weights (plus the residual) as conductances. An early token reaches the answer through many parallel routes (gray), which all-path aggregates add up. \rift{} scores only the strongest single route, the path of least resistance (blue).}
\label{fig:schematic}
\end{figure}

\section{Related Work}

\textbf{Attention-based attribution.} Abnar and Zuidema~\cite{abnar2020flow} introduced attention rollout, the product of residual-augmented attention matrices, and attention flow, a maximum flow through the layered attention graph; generalized attention flow adds gradients and a barrier method~\cite{azarkhalili2025gaf}. Later work weighted attention by the norm of transformed value vectors~\cite{kobayashi2020norms}, decomposed full layer contributions~\cite{ferrando2022alti}, propagated relevance~\cite{chefer2021beyond,achtibat2024attnlrp}, and extracted, for each prediction, a subgraph of its most important edges~\cite{ferrando2024routes}; \rift{} instead scores every token by its single best route. A single maximum-probability path, with edge cost $-\log a$, has explained Transformers on tabular data~\cite{trevino2024mla}; on our data, series resistance predicts influence better than this multiplicative cost in most datasets (Section~\ref{sec:closest}). Whether attention explains predictions has been debated~\cite{jain2019attention,wiegreffe2019attention}; we sidestep the debate by scoring every method against interventions on the input, in the spirit of erasure-based faithfulness tests~\cite{deyoung2020eraser}. Gradient attributions~\cite{ancona2018gradients}, activation patching~\cite{meng2022rome,conmy2023acdc,zhang2024patching} and its gradient-based approximation~\cite{kramar2024atp}, and ablation-trained context attribution~\cite{cohenwang2024contextcite,cohenwang2025at2} measure effects more directly, at the cost of backward passes, many ablations, or training a surrogate.

\textbf{Resistance and over-squashing.} In graph neural networks, curvature~\cite{topping2022curvature}, effective resistance~\cite{black2023resistance} and topology~\cite{digiovanni2023oversquashing} bound how well information travels between nodes. Barbero et al.~\cite{barbero2024glasses} carried this view to decoder-only Transformers, where the causal mask gives early tokens more paths to the final token. We use resistance as a \emph{measurement} of influence and find that this abundance of paths is exactly what misleads all-path scores.

\textbf{Position bias and dedicated routes.} LLMs place disproportionate attention on early tokens (attention sinks)~\cite{xiao2024sinks} and use long contexts unevenly~\cite{liu2024lost}. Theory shows that causal masking biases attention toward early tokens as depth grows~\cite{wu2025positionbias}; with residual connections, causal masking alone yields broad, often U-shaped influence profiles~\cite{chowdhury2026birth,herasimchyk2026structural}, which Herasimchyk et al.\ derive from a residual-aware cumulative rollout and find to match measured input-token influence in pretrained models~\cite{herasimchyk2026structural}. Abnar and Zuidema already expected rollout and flow to favor initial tokens in decoders and suggested normalizing by the receptive field~\cite{abnar2020flow}; Metzger et al.\ normalized flow in decoders to make it position independent~\cite{metzger2022general}. We complement these analyses in three ways: we score against token-replacement interventions, which in our data are nearly independent of position; we cover attention flow and effective resistance; and we study models with rotary position embeddings. Mechanistic studies find that copying and retrieval run through a few dedicated heads~\cite{olsson2022induction,wu2024retrievalhead}, which anticipates our finding that a single strong route reflects content.

\section{A Resistance View of Token Influence}

\subsection{The layered causal network}
Consider a decoder with $L$ layers processing $T$ tokens. Node $(\ell,i)$ is token $i$ after layer $\ell$, with $\ell=0$ for the input embeddings. For every $j\le i$ there is an edge $(\ell-1,j)\to(\ell,i)$ with conductance
\begin{equation}
c_\ell(i,j)=\tfrac12\big(\bar A^{\ell}_{ij}+\delta_{ij}\big),
\label{eq:conductance}
\end{equation}
where $\bar A^{\ell}$ is the attention matrix of layer $\ell$ averaged over heads and $\delta_{ij}$ is the Kronecker delta. This is the residual-augmented attention graph used by rollout and attention flow~\cite{abnar2020flow}; each row sums to one. The resistance of an edge is $1/c_\ell(i,j)$. Section~\ref{sec:robust} also tests two contribution-aware conductances: value-vector norms~\cite{kobayashi2020norms} and the norm of each block's local Jacobian, i.e., the sensitivity of a token's output state to each input state.

\subsection{\rift{}: the path of least resistance}
Let $s$ be a source position and $t$ the answer position (the last prompt token). A path from $(0,s)$ to $(L,t)$ visits positions $s=j_0\le j_1\le\dots\le j_L=t$, one edge per layer. Its series resistance is $R(P)=\sum_{\ell=1}^{L} 1/c_\ell(j_\ell,j_{\ell-1})$, and we define
\begin{equation}
\mathrm{RIFT}(s)=\Big(\min_{P:\,(0,s)\to(L,t)} R(P)\Big)^{-1}.
\end{equation}
The minimum is a shortest-path problem on a layered graph. Algorithm~\ref{alg:plr} solves it for all sources at once, from the last layer down, in a few milliseconds for our prompts (Section~\ref{sec:cost}); keeping the argmin of each step also returns the route, i.e., the layers at which it moves between positions. Every path has exactly $L$ edges, and a single weak edge makes its resistance large, so \rift{} measures the strength of the best route including its bottleneck. The closely related \emph{widest path}, $\max_P\min_{e\in P}c_e$, scores the bottleneck alone.

\begin{algorithm}[t]
\caption{\rift{}: path of least resistance for all sources at once}
\label{alg:plr}
\begin{algorithmic}[1]
\Require conductances $c_\ell(i,j)$ from Eq.~\eqref{eq:conductance}, answer position $t$
\State $B_L(t)\gets0$; \ $B_L(i)\gets\infty$ for $i\ne t$
\For{$\ell=L,\dots,1$}
  \For{$j=1,\dots,T$}
    \State $B_{\ell-1}(j)\gets\min_{i\ge j}\big[\,1/c_\ell(i,j)+B_\ell(i)\,\big]$\label{alg:update}
  \EndFor
\EndFor
\State \Return $\mathrm{RIFT}(s)=1/B_0(s)$ for every source $s$
\end{algorithmic}
\end{algorithm}

\begin{lemma}[Exactness]
$B_\ell(j)$ is the smallest series resistance of any path from $(\ell,j)$ to $(L,t)$ ($\infty$ if there is none), so Algorithm~\ref{alg:plr} returns $\mathrm{RIFT}(s)$ exactly for every source $s$, in $O(LT^2)$ time.
\end{lemma}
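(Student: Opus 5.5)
The plan is a backward induction on the layer index $\ell$, from $\ell=L$ down to $\ell=0$. For each $\ell$ let $\mathcal P_\ell(j)$ be the set of paths from $(\ell,j)$ to $(L,t)$. Such a path visits positions $j=j_\ell\le j_{\ell+1}\le\dots\le j_L=t$ and uses one edge $(m-1,j_{m-1})\to(m,j_m)$ per layer. Its series resistance is $\sum_{m=\ell+1}^{L}1/c_m(j_m,j_{m-1})$, and the claim to prove is $B_\ell(j)=\min_{P\in\mathcal P_\ell(j)}R(P)$, with $\min\emptyset=\infty$.

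\textbf{Base case.} At $\ell=L$ the only path is the empty path at $(L,t)$, which has resistance $0$. For $j\ne t$ there is no path at all. This matches the initialization of $B_L$.

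\textbf{Inductive step.} Assume the claim holds at layer $\ell$, and fix $j$. Every path in $\mathcal P_{\ell-1}(j)$ splits uniquely into a first edge $(\ell-1,j)\to(\ell,i)$ with $i\ge j$ and a tail $Q\in\mathcal P_\ell(i)$. Conversely, any such edge followed by any such tail is a valid path, since the monotonicity constraint only links consecutive positions. The resistance is additive: $R(P)=1/c_\ell(i,j)+R(Q)$. Hence
\[
\min_{P\in\mathcal P_{\ell-1}(j)}R(P)=\min_{i\ge j}\Big[1/c_\ell(i,j)+\min_{Q\in\mathcal P_\ell(i)}R(Q)\Big],
\]
and by the induction hypothesis the inner minimum equals $B_\ell(i)$. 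This is exactly line~\ref{alg:update}.

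Two technical points need care in this step, and they are the only real obstacles. The first is $\infty$ arithmetic. An index $i$ with no continuation contributes $\infty$, which is correct. A zero conductance, which arises when $i\ne j$ and $\bar A^\ell_{ij}=0$, gives an infinite edge resistance. We either treat such an edge as absent or note that it contributes $\infty$ to the minimum, which is equivalent. Edges with $i=j$ always have $c_\ell\ge\frac12$, so they are finite. The second point is that all resistances are nonnegative, so no reordering or negative-cycle issue arises. The graph is in any case acyclic and layered, so the recursion is a plain DP over a DAG.

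\textbf{Conclusion.} At $\ell=0$ we get $B_0(s)=\min_P R(P)$ over paths from $(0,s)$ to $(L,t)$, so $1/B_0(s)=\mathrm{RIFT}(s)$, with the convention $1/\infty=0$. The claim holds for all $s$ at once because the recursion never depends on the source.

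\textbf{Complexity.} There are $L$ layers. Each layer has $T$ entries $B_{\ell-1}(j)$, and each entry is a minimum over at most $T-j+1\le T$ candidates, each costing $O(1)$ given $c_\ell$. This gives $O(LT^2)$ time in total. Computing the conductances from the head-averaged attention is also $O(LT^2)$ per head, given the attention matrices.
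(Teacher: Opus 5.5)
Your proof is correct and follows the paper's own argument. Both use backward induction on $\ell$, starting from the empty path at $(L,t)$, split each path into a first edge to $(\ell,i)$ with $i\ge j$ followed by a tail, and minimize over $i$ with $O(T^2)$ work per layer; your explicit handling of $\infty$ arithmetic and zero conductances is a harmless addition, whereas the paper notes that positive softmax weights make every edge finite, so every source $s\le t$ gets a finite score.
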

\begin{IEEEproof}
Backward induction on $\ell$; at $\ell=L$ the only path is the empty one at $j=t$. A path from $(\ell-1,j)$ is an edge to some $(\ell,i)$ with $i\ge j$ followed by a path from $(\ell,i)$ to $(L,t)$, so its resistance is $1/c_\ell(i,j)$ plus at least $B_\ell(i)$, with equality for the best remainder. Line~\ref{alg:update} minimizes over all $i\ge j$, hence over all paths, with $O(T^2)$ operations per layer. Softmax weights are positive, so every source $s\le t$ gets a finite score. Replacing $(\min,+)$ by $(\max,\min)$ gives the widest path by the same argument.
\end{IEEEproof}

\textbf{Single-jump routes.} A route that stays on the source's residual stream up to layer $\ell$, jumps to $t$ there and stays on $t$ has the resistance
\begin{equation}
R_\ell(s)=\sum_{k<\ell}\frac{2}{1+\bar A^{k}_{ss}}+\frac{2}{\bar A^{\ell}_{ts}}+\sum_{k>\ell}\frac{2}{1+\bar A^{k}_{tt}}.
\label{eq:jump}
\end{equation}
Each residual step costs between 1 and 2, whereas the jump costs at least 2 and grows without bound as the attention weight shrinks. Such a route is therefore governed by the strongest read of $s$ by the answer position, but the layer of the jump is chosen by the cost of the whole route, and $\mathrm{RIFT}(s)\ge1/\min_\ell R_\ell(s)$, with equality when the best route jumps once. The strongest attention weight $\max_\ell\bar A^{\ell}_{ts}$, a diagnostic in Section~\ref{sec:closest}, keeps only the middle term.

\subsection{All-path aggregates}
The \emph{effective resistance} $R_{\mathrm{eff}}(s\to t)$ is the minimum energy $\sum_e f_e^2/c_e$ of a unit flow from $(0,s)$ to $(L,t)$ (Thomson's principle~\cite{doyle1984random}). We use its directed variant, in which flows must be nonnegative along the causal direction, and score $1/R_{\mathrm{eff}}$. \emph{Attention flow} is the maximum flow from $(0,s)$ to $(L,t)$ with capacities $c$~\cite{abnar2020flow}. \emph{Attention rollout} is the entry $(t,s)$ of $W_L\cdots W_1$ with $W_\ell=[c_\ell(i,j)]$, the sum over all paths of the product of their weights~\cite{abnar2020flow}. \emph{Token distance} scores $1/(t-s)$. All scores use the same clean forward pass and never see the answer.

\section{Why All-Path Aggregates Favor Early Tokens}
\label{sec:theory}

In an electric circuit, adding parallel wires lowers the resistance. In a causal network, a route from a source $s$ to the answer $t$ chooses non-decreasing positions between $s$ and $t$ across the $L$ layers, so there are $N(s,t)=\binom{t-s+L-1}{L-1}$ routes~\cite{barbero2024glasses}. For $L=28$ this gives 28 routes from the previous token and about $3\times10^{27}$ from a token 100 positions back. All-path aggregates reward the number of routes, or the number of routes that can carry flow independently. The following proposition shows that, when all edges are equally strong, both grow with distance from the answer.

\begin{proposition}[Equal conductances]
\label{prop:equal}
Let every edge have the same conductance $c$, and let $L\ge2$. For a source $s\le t$:
(a) every path has resistance $L/c$, so $\mathrm{RIFT}(s)=c/L$ for all sources;
(b) the maximum flow from $(0,s)$ to $(L,t)$ equals $c\,(t-s+1)$;
(c) $c\,(t-s+1)/L \le 1/R_{\mathrm{eff}}(s\to t)\le c\,(t-s+1)$, and $1/R_{\mathrm{eff}}$ never decreases as $s$ moves away from $t$;
(d) attention rollout equals $c^L N(s,t)$, which grows like $(t-s)^{L-1}$.
\end{proposition}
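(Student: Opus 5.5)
The plan is to make one structural reduction and then treat the four parts in order, reusing a single family of edge-disjoint routes for both (b) and (c). The reduction: a path from $(0,s)$ to $(L,t)$ visits non-decreasing positions, so only nodes with positions in $[s,t]$ matter. An edge into a position $i>t$ leads to a node from which $(L,t)$ cannot be reached. Hence, in any nonnegative flow along the causal direction, and in any path count, such edges carry nothing. Throughout, write $m=t-s+1$.

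\textbf{Parts (a) and (d): direct counting.} For (a), every path has exactly $L$ edges, each of resistance $1/c$. So $R(P)=L/c$ for every $P$, and $\mathrm{RIFT}(s)=c/L$ regardless of $s$; Lemma~1 is not even needed. For (d), the rollout entry $(t,s)$ of $W_L\cdots W_1$ is a sum over index sequences $s=j_0\le j_1\le\dots\le j_L=t$ of $\prod_\ell c_\ell(j_\ell,j_{\ell-1})$, and each product equals $c^L$. The sum is therefore $c^L$ times the number of non-decreasing choices of the $L-1$ intermediate positions in $[s,t]$. By stars and bars this number is $\binom{t-s+L-1}{L-1}=N(s,t)$. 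Viewed as a polynomial in $t-s$, it has degree $L-1$ and leading coefficient $1/(L-1)!$, which gives the stated growth.

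\textbf{Part (b): max-flow/min-cut.} For the upper bound, take the cut consisting of the out-edges of $(0,s)$. The useful ones go to $(1,i)$ for $i\in[s,t]$, so the relevant capacity is $cm$; the edges to $i>t$ are excluded by the reduction. For the lower bound, build $m$ edge-disjoint paths $P_k$, one for each $k\in[s,t]$. Each $P_k$ jumps $(0,s)\to(1,k)$, stays on the residual stream of $k$ through layer $L-1$, and jumps $(L-1,k)\to(L,t)$. Since $L\ge2$, the first and last edges lie in different layers. First edges are distinct because their heads differ, last edges are distinct because their tails differ, and the middle edges lie on different positions. Sending $c$ along each $P_k$ gives a feasible flow of value $cm$.

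\textbf{Part (c): energy bounds and monotonicity.} I expect this to be the main obstacle, since it requires care with the directed, nonnegative-flow version of Thomson's principle.
\begin{itemize}
\item \emph{Upper bound on $1/R_{\mathrm{eff}}$.} Any unit flow must leave $(0,s)$ through at most $m$ useful edges, with $\sum f_e=1$. By Cauchy--Schwarz, $\sum f_e^2/c\ge 1/(mc)$, so $R_{\mathrm{eff}}\ge 1/(mc)$.
\item \emph{Lower bound on $1/R_{\mathrm{eff}}$.} Send $1/m$ along each $P_k$ from (b). The energy is $mL\cdot(1/m)^2/c=L/(mc)$, and since $R_{\mathrm{eff}}$ is the minimum energy, $R_{\mathrm{eff}}\le L/(mc)$.
\item \emph{Monotonicity.} Take an optimal unit flow $f$ from $(0,s)$ and a source $s'<s$. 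Reroute each edge $(0,s)\to(1,i)$ to $(0,s')\to(1,i)$; this edge exists because $s'\le s\le i$. All other edges are left unchanged. The result is a nonnegative unit flow from $(0,s')$ to $(L,t)$ with the same energy, since all conductances are equal. Hence $R_{\mathrm{eff}}(s'\to t)\le R_{\mathrm{eff}}(s\to t)$, so $1/R_{\mathrm{eff}}$ never decreases as $s$ moves away from $t$.
\end{itemize}
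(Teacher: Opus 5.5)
Your proposal is correct and follows the paper's proof sketch essentially step for step. It uses the same cut at the useful out-edges of $(0,s)$ and the same $t-s+1$ edge-disjoint jump--stay--jump routes for (b). For (c) it uses the same uniform split over those routes, Cauchy--Schwarz on the first-layer edges, and rerouting of the first-layer edges to an earlier source. For (a) and (d) it counts paths directly. Your extra details fill in what the sketch leaves implicit: why edges into positions beyond $t$ carry no flow in the nonnegative directed setting, the explicit disjointness check, and the stars-and-bars count giving $N(s,t)$.
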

\begin{IEEEproof}[Proof sketch]
(a) Every path has exactly $L$ edges. (b) The $t-s+1$ edges leaving $(0,s)$ towards positions that can still reach $t$ form a cut of capacity $c\,(t-s+1)$. Because $L\ge2$, the $t-s+1$ routes that move to position $j\in[s,t]$ in the first layer, stay at $j$, and move to $t$ in the last layer are edge-disjoint; they saturate this cut. (c) Sending $1/(t-s+1)$ units along each of these routes costs energy $L/(c\,(t-s+1))$, which upper-bounds $R_{\mathrm{eff}}$. By the Cauchy--Schwarz inequality, any unit flow through the cut costs at least $1/(c\,(t-s+1))$. Any unit flow from $(0,s)$ can be sent from $(0,s-1)$ into the same first-layer nodes at the same energy, so an earlier source never has a higher $R_{\mathrm{eff}}$. (d) Rollout sums the weight $c^L$ of every path.
\end{IEEEproof}

With equal conductances, then, attention flow grows linearly with distance from the answer, effective conductance stays within a factor $L$ of the same linear function, rollout grows polynomially with degree $L-1$, and \rift{} is constant. Part (c) makes precise the conjecture that earlier tokens have lower effective resistance~\cite{lee2025gnnlens}. Equal conductances are a structural null rather than a model of row-normalized attention, but they capture what dominates in practice. Replacing every conductance of the 1.7B models by a constant reproduces the ranking of effective resistance on their Jacobian graphs almost exactly (Section~\ref{sec:position}). For attention flow on row-normalized graphs, the cut into the answer node bounds the flow by $\sum_{j=s}^{t}c_L(t,j)$, which shrinks as the source moves toward the answer; for the 1.7B instruction-tuned model the flow equals this bound exactly. For row-normalized attention, products of causal attention matrices concentrate on early tokens as depth grows~\cite{wu2025positionbias}, and residual connections temper this into broad, often U-shaped profiles~\cite{herasimchyk2026structural,chowdhury2026birth}.

Why should a single route reflect influence? We offer a hypothesis, not a proof. With attention held fixed, influence would be linear and equal to the sum over routes of the products of their weights, which is rollout; in that sum, the many weak routes from early tokens add up to a position effect (Proposition~\ref{prop:equal}(d)). Our results suggest that content instead travels through a few strong routes, consistent with dedicated copying and retrieval heads~\cite{olsson2022induction,wu2024retrievalhead}. Series resistance penalizes a weak link sharply (a link of weight $0.01$ adds $100$), so \rift{} is governed by the bottlenecks of the best route. A multiplicative path score ($-\log c$ per edge) penalizes weak links less and predicts influence less well (Section~\ref{sec:closest}).

\section{Experimental Setup}

\textbf{Models.} We use four decoders from two families: three Qwen3 models~\cite{qwen3report}, namely Qwen3-1.7B-Base, its instruction-tuned version Qwen3-1.7B (both 28 layers) and Qwen3-4B-Instruct-2507 (36 layers), and Llama-3.2-3B-Instruct~\cite{grattafiori2024llama3} (28 layers). All four are used as released, without fine-tuning; inference runs in FP32 with explicit attention probabilities.

\textbf{Synthetic tasks.} We use three controlled in-context retrieval families whose answer is a single letter. In \emph{single-hop} lookup the prompt lists records such as ``\texttt{river: M}'' and asks for the value of one key. In \emph{two-hop} lookup a key maps to an intermediate word that maps to a letter, with two competing chains. In \emph{first-value} recall each key appears twice with conflicting values and the model must return the first one. Prompts have about 130 or 260 tokens (\val{C1-len-min}--\val{C1-len-max} in the confirmation data). Background prose is either interleaved between the records or placed as one block before them: the \emph{mixed} layout interleaves it for single-hop prompts and uses a block for the other two families, and the \emph{interleaved} layout interleaves it for all three. A task and length are used for a model only if it answers at least 80\% of 20 separate screening prompts; afterwards, every prompt is scored whether or not the model answers it correctly. Llama qualified for single-hop lookup and for first-value recall at the shorter length only.

\textbf{Causal labels.} For each prompt we score 12 source tokens: the word tokens of the records that lead to the answer (one record, or both records of a two-hop chain), up to six word tokens of other records, and filler words. For a source token, we replace it by a token of the same class (an unused letter, an unused key word, or another filler word) and record the change in the answer margin $z_{\mathrm{answer}}-z_{\mathrm{contrast}}$, where the contrast letter never appears as a value. We do this twice, with two different randomly drawn replacements, and use the mean absolute change as the label. The Spearman agreement between the two replacements (\emph{reliability}) shows how noisy single replacements are. Secondary labels are the Jacobian sensitivity of the final state, the norm of the margin gradient, and the signed change.

\textbf{Natural text.} We also use extractive question answering on Wikipedia paragraphs from the SQuAD v1.1 development set~\cite{rajpurkar2016squad}. We keep questions whose answer is one word that occurs once in its paragraph and is a single token for both tokenizers, at most one question per paragraph (\val{N-pool} candidates), and ask for one word copied from the passage. The two instruction-tuned models, Qwen3-1.7B and Llama-3.2-3B-Instruct, are each scored on the first 64 questions, in a fixed random order, that they answer correctly (\val{H-cleanacc}\% and \val{I-cleanacc}\% of the questions scanned). The 12 sources are the answer word, up to three words of its sentence that also occur in the question, up to two such words in other sentences, and words drawn at random across the paragraph. Each source is replaced twice by a random word with the same capitalization; the label uses the change in the log-probability of the answer instead of the margin, and the signed change is the only secondary label.

\textbf{Evaluation.} For each prompt we compute the Spearman correlation between a score and the labels of its 12 sources, average it over prompts, and compare methods with paired differences. 95\% confidence intervals are percentile intervals from 10{,}000 bootstrap resamples of prompts, stratified by task and length (unstratified for SQuAD). They reflect prompt sampling only; datasets that share a model are not independent. To measure position dependence, we also correlate each score with token proximity, $1/(t-s)$; negative values mean that earlier tokens score higher.

\textbf{Protocol.} An \emph{exploration} phase (\val{Mb-n} prompts for each 1.7B model) evaluated effective resistance, the measure we initially hypothesized, and identified the path of least resistance as a far better predictor. A \emph{confirmation} phase then tested \rift{} on five held-out Qwen3 datasets of 64 prompts (Table~\ref{tab:main}) with seeds disjoint from exploration. Datasets A and B repeat the exploration conditions on new prompts; C to E add a larger model and the interleaved layout. Code, configuration, sample sizes and the decision rule were fixed and hashed before any confirmation label was computed, and labels were scored once. The rule required \rift{} to beat each of three baselines with a 95\% interval above zero in every dataset. We fixed these baselines in advance as the standard scores that, like \rift{}, need nothing beyond one forward pass (no gradients, extra forward passes or training): token distance, attention rollout and attention flow. Gradient-, perturbation- and training-based attribution~\cite{ancona2018gradients,zhang2024patching,pramanik2026heta,cohenwang2025at2} belongs to a different cost regime, and contribution-aware graphs~\cite{kobayashi2020norms,ferrando2022alti} change the conductances rather than how routes are aggregated; we report gradient$\times$input and value-norm conductances as diagnostics (Section~\ref{sec:robust}). We then repeated the confirmation on a second model family, Llama-3.2-3B-Instruct (datasets F and G, one per layout, 64 fresh prompts each), and on natural text (datasets H and I), each time under the same rule, fixed before any label was computed. Everything ran on two RTX 5080 GPUs (\val{M1-gpuh} GPU-hours for exploration and \val{C1-gpuh}, \val{C2-gpuh} and \val{C3-gpuh} for the three confirmations).

\begin{table*}[t]
\caption{Predicting causal token influence: mean within-prompt Spearman correlation with the replacement label (the mean of two replacements), and the paired advantage of \rift{} over each baseline with 95\% bootstrap intervals. Acc.: share of the scored prompts answered correctly. Rel.: agreement between two single replacements; \rift{}$_1$: \rift{} against a single replacement, the like-for-like counterpart of Rel. A--E and H use Qwen3, F, G and I Llama-3.2-3B-Instruct; H and I are correctly answered SQuAD questions, with the log-probability label. In the Qwen3 datasets, attention flow ranks tokens from earliest to latest (exactly in all but one prompt), so its correlation with the label is the negative of that of distance.}
\label{tab:main}
\centering
\setlength{\tabcolsep}{2.6pt}
\begin{tabular}{@{}lllcrrccccc@{}}
\toprule
& & & & & & & & \multicolumn{3}{c}{Advantage of \rift{} over}\\
\cmidrule(l){9-11}
& Model & Prompts per task & Layout & Acc.\,\% & Rel. & \rift{}$_1$ & \rift{} & distance & rollout & attention flow\\
\midrule
\multicolumn{11}{@{}l}{\emph{Synthetic tasks: confirmation (held out, pre-specified)}}\\
A & 1.7B base & \val{A-comp} & mixed & \val{A-acc} & \val{A-ceil} & \val{A-riftsingle} & \textbf{\val{A-plr}} & \val{A-ddist} [\val{A-ddist-lo}, \val{A-ddist-hi}] & \val{A-droll} [\val{A-droll-lo}, \val{A-droll-hi}] & \val{A-dflow} [\val{A-dflow-lo}, \val{A-dflow-hi}]\\
B & 1.7B instruct & \val{B-comp} & mixed & \val{B-acc} & \val{B-ceil} & \val{B-riftsingle} & \textbf{\val{B-plr}} & \val{B-ddist} [\val{B-ddist-lo}, \val{B-ddist-hi}] & \val{B-droll} [\val{B-droll-lo}, \val{B-droll-hi}] & \val{B-dflow} [\val{B-dflow-lo}, \val{B-dflow-hi}]\\
C & 4B instruct & \val{C-comp} & mixed & \val{C-acc} & \val{C-ceil} & \val{C-riftsingle} & \textbf{\val{C-plr}} & \val{C-ddist} [\val{C-ddist-lo}, \val{C-ddist-hi}] & \val{C-droll} [\val{C-droll-lo}, \val{C-droll-hi}] & \val{C-dflow} [\val{C-dflow-lo}, \val{C-dflow-hi}]\\
D & 1.7B base & \val{D-comp} & interleaved & \val{D-acc} & \val{D-ceil} & \val{D-riftsingle} & \textbf{\val{D-plr}} & \val{D-ddist} [\val{D-ddist-lo}, \val{D-ddist-hi}] & \val{D-droll} [\val{D-droll-lo}, \val{D-droll-hi}] & \val{D-dflow} [\val{D-dflow-lo}, \val{D-dflow-hi}]\\
E & 1.7B instruct & \val{E-comp} & interleaved & \val{E-acc} & \val{E-ceil} & \val{E-riftsingle} & \textbf{\val{E-plr}} & \val{E-ddist} [\val{E-ddist-lo}, \val{E-ddist-hi}] & \val{E-droll} [\val{E-droll-lo}, \val{E-droll-hi}] & \val{E-dflow} [\val{E-dflow-lo}, \val{E-dflow-hi}]\\
F & Llama 3B & \val{F-comp} & mixed & \val{F-acc} & \val{F-ceil} & \val{F-riftsingle} & \textbf{\val{F-plr}} & \val{F-ddist} [\val{F-ddist-lo}, \val{F-ddist-hi}] & \val{F-droll} [\val{F-droll-lo}, \val{F-droll-hi}] & \val{F-dflow} [\val{F-dflow-lo}, \val{F-dflow-hi}]\\
G & Llama 3B & \val{G-comp} & interleaved & \val{G-acc} & \val{G-ceil} & \val{G-riftsingle} & \textbf{\val{G-plr}} & \val{G-ddist} [\val{G-ddist-lo}, \val{G-ddist-hi}] & \val{G-droll} [\val{G-droll-lo}, \val{G-droll-hi}] & \val{G-dflow} [\val{G-dflow-lo}, \val{G-dflow-hi}]\\
\midrule
\multicolumn{11}{@{}l}{\emph{Natural text (SQuAD): confirmation (held out, pre-specified)}}\\
H & 1.7B instruct & \val{H-comp} & Wikipedia & \val{H-acc} & \val{H-ceil} & \val{H-riftsingle} & \textbf{\val{H-plr}} & \val{H-ddist} [\val{H-ddist-lo}, \val{H-ddist-hi}] & \val{H-droll} [\val{H-droll-lo}, \val{H-droll-hi}] & \val{H-dflow} [\val{H-dflow-lo}, \val{H-dflow-hi}]\\
I & Llama 3B & \val{I-comp} & Wikipedia & \val{I-acc} & \val{I-ceil} & \val{I-riftsingle} & \textbf{\val{I-plr}} & \val{I-ddist} [\val{I-ddist-lo}, \val{I-ddist-hi}] & \val{I-droll} [\val{I-droll-lo}, \val{I-droll-hi}] & \val{I-dflow} [\val{I-dflow-lo}, \val{I-dflow-hi}]\\
\midrule
\multicolumn{11}{@{}l}{\emph{Exploration (\val{Mb-n} prompts per model)}}\\
& 1.7B base & single, first & mixed & \val{Mb-acc} & \val{Mb-ceil} & -- & \val{Mb-plr} & \val{Mb-ddist} [\val{Mb-ddist-lo}, \val{Mb-ddist-hi}] & \val{Mb-droll} [\val{Mb-droll-lo}, \val{Mb-droll-hi}] & \val{Mb-dflow} [\val{Mb-dflow-lo}, \val{Mb-dflow-hi}]\\
& 1.7B instruct & single, two-hop & mixed & \val{Mi-acc} & \val{Mi-ceil} & -- & \val{Mi-plr} & \val{Mi-ddist} [\val{Mi-ddist-lo}, \val{Mi-ddist-hi}] & \val{Mi-droll} [\val{Mi-droll-lo}, \val{Mi-droll-hi}] & \val{Mi-dflow} [\val{Mi-dflow-lo}, \val{Mi-dflow-hi}]\\
\bottomrule
\end{tabular}
\end{table*}

\section{Results}

\subsection{\rift{} predicts causal influence}
\label{sec:main}
Table~\ref{tab:main} and Fig.~\ref{fig:main} give the confirmatory results. \rift{} correlates with the replacement labels at \val{C1-plr-min}--\val{C1-plr-max}. Single replacements are noisy: two different random replacements of the same token agree at only \val{C1-rel12-min}--\val{C1-rel12-max}. \rift{} predicts one replacement at \val{C1-riftsingle-min}--\val{C1-riftsingle-max}, i.e., \val{C1-lfl-min}--\val{C1-lfl-max}\% as well as the other replacement does. A Spearman--Brown estimate, only a heuristic here because it treats the two replacements as parallel measurements and our correlations are rank-based, puts the attainable correlation with the averaged label near $\sqrt{2r/(1+r)}=\val{C1-sbceil-min}$--$\val{C1-sbceil-max}$, where $r$ is the agreement of single replacements; headroom remains. All \val{n-comp} pre-specified comparisons favor \rift{}, with the lowest interval bound at \val{C1-lowest-ci}. In the Qwen3 datasets, attention flow ranks the tokens from earliest to latest (exactly in \val{Q-flow-exact} of \val{Q-flow-n} prompts), so there its comparisons nearly duplicate those against token distance. The strongest baseline in each dataset trails by \val{C1-bestgapu-min} to \val{C1-bestgapu-max}, and no baseline exceeds \val{C1-tools-max}. The result holds for the larger 4B model (\val{C-plr}), for interleaved prose (\val{D-plr} and \val{E-plr}), for a second model family (Llama: \val{F-plr} and \val{G-plr}), for natural text (\val{H-plr} and \val{I-plr}; Section~\ref{sec:natural}), and in exploration (\val{Mb-plr} and \val{Mi-plr}). The widest path, the other single-route score (a diagnostic; Section~\ref{sec:closest}), comes close at \val{C1-widest-min}--\val{C1-widest-max} (Fig.~\ref{fig:main}); no all-path aggregate does. The labels themselves are nearly independent of position: per dataset, token distance correlates with them between \val{C1-distlab-min} and \val{C1-distlab-max}. \rift{} also leads within every task family (Table~\ref{tab:tasks}). Token distance is competitive only for two-hop prompts, where the prose block places the relevant records right before the question. Reported agreement between rollout and patching varies across settings (e.g., about 0.5 in~\cite{pramanik2026heta}); in our prompts, where the relevant record can sit anywhere in the context, rollout's position bias dominates.

\begin{table}[t]
\caption{Confirmation results by task family (prompt-weighted means over the \val{n-conf-word} datasets; $N$ is the number of prompts). Rel.: agreement between two single replacements; the score columns use their mean.}
\label{tab:tasks}
\centering
\setlength{\tabcolsep}{4pt}
\begin{tabular}{@{}lrrcccc@{}}
\toprule
Task & $N$ & Rel. & \rift{} & Distance & Rollout & Attn.\ flow\\
\midrule
Single-hop & \val{task-single-n} & \val{task-single-ceil} & \textbf{\val{task-single-plr}} & \val{task-single-dist} & \val{task-single-roll} & \val{task-single-flow}\\
Two-hop & \val{task-twohop-n} & \val{task-twohop-ceil} & \textbf{\val{task-twohop-plr}} & \val{task-twohop-dist} & \val{task-twohop-roll} & \val{task-twohop-flow}\\
First-value & \val{task-first-n} & \val{task-first-ceil} & \textbf{\val{task-first-plr}} & \val{task-first-dist} & \val{task-first-roll} & \val{task-first-flow}\\
SQuAD & \val{task-squad-n} & \val{task-squad-ceil} & \textbf{\val{task-squad-plr}} & \val{task-squad-dist} & \val{task-squad-roll} & \val{task-squad-flow}\\
\bottomrule
\end{tabular}
\end{table}

\begin{figure}[t]
\centering
\begin{tikzpicture}
\begin{axis}[
  scale only axis, width=0.58\columnwidth, height=5.2cm,
  xmin=-0.25, xmax=0.8,
  ymin=0.4, ymax=9.6,
  ytick={1,2,3,4,5,6,7,8,9},
  yticklabels={{I (Llama, SQuAD)},{H (1.7B instr., SQuAD)},{G (Llama, interl.)},{F (Llama)},{E (1.7B instr., interl.)},{D (1.7B base, interl.)},{C (4B instruct)},{B (1.7B instruct)},{A (1.7B base)}},
  yticklabel style={font=\scriptsize},
  xticklabel style={font=\scriptsize},
  xlabel={\footnotesize Spearman correlation with the replacement label},
  axis line style={inkmuted,line width=0.4pt},
  tick style={inkmuted},
  xmajorgrids, grid style={gridgray,line width=0.3pt},
  extra x ticks={0}, extra x tick style={grid style={inkmuted,line width=0.5pt}},
  legend style={font=\scriptsize,draw=none,fill=none,at={(0.5,1.02)},anchor=south,legend columns=4,/tikz/every even column/.append style={column sep=4pt}},
  clip=false,
]
\addplot[only marks,mark=triangle*,mark size=2.6pt,color=slotorange] table[x=dist,y=y] {data_main.dat};
\addlegendentry{distance}
\addplot[only marks,mark=square*,mark size=2.0pt,color=slotaqua] table[x=roll,y=y] {data_main.dat};
\addlegendentry{rollout}
\addplot[only marks,mark=diamond*,mark size=2.9pt,color=slotviolet,mark options={draw=white,line width=0.4pt,fill=slotviolet}] table[x=widest,y expr=\thisrow{y}-0.2] {data_main.dat};
\addlegendentry{widest path}
\addplot[only marks,mark=*,mark size=2.6pt,color=slotblue,mark options={draw=white,line width=0.4pt,fill=slotblue}] table[x=plr,y expr=\thisrow{y}+0.2] {data_main.dat};
\addlegendentry{\rift{}}
\end{axis}
\end{tikzpicture}
\caption{Confirmatory results for the \val{n-conf-word} held-out datasets (A--E and H: Qwen3; F, G and I: Llama-3.2-3B-Instruct; H and I: SQuAD). \rift{} (blue) is far above token distance and rollout in every dataset; the widest path (violet), its single-route sibling and a diagnostic outside the decision rule, comes close. Attention flow is omitted (for Qwen3 it is the mirror image of distance); Table~\ref{tab:main} gives all values with the label reliability.}
\label{fig:main}
\end{figure}

\subsection{Natural text}
\label{sec:natural}
On SQuAD paragraphs (datasets H and I), \rift{} reaches \val{H-plr} and \val{I-plr}, its highest correlations in this study, while no baseline exceeds \val{N-tools-max}. The labels are no noisier than in the synthetic tasks: two replacements agree at \val{H-rel12} and \val{I-rel12}, and \rift{} predicts a single replacement \val{H-lfl}\% and \val{I-lfl}\% as well as the other replacement does. \rift{} ranks the answer word first among the 12 sources in \val{H-top-plr} and \val{I-top-plr} of the 64 prompts; token distance, rollout and attention flow do so in at most \val{N-top-tools-max}. \rift{} also exceeds an indicator that marks the answer word and its question words (\val{H-oracle} and \val{I-oracle}), so it captures more than the location of the answer: among the remaining words, whose effects are small and noisy (two single replacements agree at \val{H-relnonrel} and \val{I-relnonrel}), it still correlates with the effects at \val{H-riftnonrel} and \val{I-riftnonrel}. Rollout stays strongly position-driven ($\rho=\val{H-pos-roll}$ and \val{I-pos-roll} with inverse distance). Fig.~\ref{fig:squad} shows a representative question, chosen by rule as the prompt whose \rift{} correlation is closest to the dataset median ($\rho=\val{exs-rho}$, median \val{exs-median}). \rift{} ranks the answer word first and the word with the second-largest effect second, as the labels do, while distance favors the last words of the paragraph and attention flow orders the words from earliest to latest.

\begin{figure}[t]
\centering
\begin{tikzpicture}[x=0.56cm,y=0.42cm]
\node[anchor=east,font=\scriptsize] at (-0.1,0) {Label};
\fill[slotblue!50!white] (0.04,-0.42) rectangle (0.96,0.42);
\fill[slotblue!75!white] (1.04,-0.42) rectangle (1.96,0.42);
\fill[slotblue!8!white] (2.04,-0.42) rectangle (2.96,0.42);
\fill[slotblue!67!white] (3.04,-0.42) rectangle (3.96,0.42);
\fill[slotblue!83!white] (4.04,-0.42) rectangle (4.96,0.42);
\fill[slotblue!58!white] (5.04,-0.42) rectangle (5.96,0.42);
\fill[slotblue!92!white] (6.04,-0.42) rectangle (6.96,0.42);
\fill[slotblue!100!white] (7.04,-0.42) rectangle (7.96,0.42);
\fill[slotblue!25!white] (8.04,-0.42) rectangle (8.96,0.42);
\fill[slotblue!33!white] (9.04,-0.42) rectangle (9.96,0.42);
\fill[slotblue!16!white] (10.04,-0.42) rectangle (10.96,0.42);
\fill[slotblue!41!white] (11.04,-0.42) rectangle (11.96,0.42);
\node[anchor=east,font=\scriptsize] at (-0.1,-1) {RIFT};
\fill[slotblue!58!white] (0.04,-1.42) rectangle (0.96,-0.5800000000000001);
\fill[slotblue!83!white] (1.04,-1.42) rectangle (1.96,-0.5800000000000001);
\fill[slotblue!33!white] (2.04,-1.42) rectangle (2.96,-0.5800000000000001);
\fill[slotblue!50!white] (3.04,-1.42) rectangle (3.96,-0.5800000000000001);
\fill[slotblue!41!white] (4.04,-1.42) rectangle (4.96,-0.5800000000000001);
\fill[slotblue!75!white] (5.04,-1.42) rectangle (5.96,-0.5800000000000001);
\fill[slotblue!92!white] (6.04,-1.42) rectangle (6.96,-0.5800000000000001);
\fill[slotblue!100!white] (7.04,-1.42) rectangle (7.96,-0.5800000000000001);
\fill[slotblue!67!white] (8.04,-1.42) rectangle (8.96,-0.5800000000000001);
\fill[slotblue!25!white] (9.04,-1.42) rectangle (9.96,-0.5800000000000001);
\fill[slotblue!16!white] (10.04,-1.42) rectangle (10.96,-0.5800000000000001);
\fill[slotblue!8!white] (11.04,-1.42) rectangle (11.96,-0.5800000000000001);
\node[anchor=east,font=\scriptsize] at (-0.1,-2) {Distance};
\fill[slotblue!8!white] (0.04,-2.42) rectangle (0.96,-1.58);
\fill[slotblue!16!white] (1.04,-2.42) rectangle (1.96,-1.58);
\fill[slotblue!25!white] (2.04,-2.42) rectangle (2.96,-1.58);
\fill[slotblue!33!white] (3.04,-2.42) rectangle (3.96,-1.58);
\fill[slotblue!41!white] (4.04,-2.42) rectangle (4.96,-1.58);
\fill[slotblue!50!white] (5.04,-2.42) rectangle (5.96,-1.58);
\fill[slotblue!58!white] (6.04,-2.42) rectangle (6.96,-1.58);
\fill[slotblue!67!white] (7.04,-2.42) rectangle (7.96,-1.58);
\fill[slotblue!75!white] (8.04,-2.42) rectangle (8.96,-1.58);
\fill[slotblue!83!white] (9.04,-2.42) rectangle (9.96,-1.58);
\fill[slotblue!92!white] (10.04,-2.42) rectangle (10.96,-1.58);
\fill[slotblue!100!white] (11.04,-2.42) rectangle (11.96,-1.58);
\node[anchor=east,font=\scriptsize] at (-0.1,-3) {Rollout};
\fill[slotblue!100!white] (0.04,-3.42) rectangle (0.96,-2.58);
\fill[slotblue!92!white] (1.04,-3.42) rectangle (1.96,-2.58);
\fill[slotblue!83!white] (2.04,-3.42) rectangle (2.96,-2.58);
\fill[slotblue!67!white] (3.04,-3.42) rectangle (3.96,-2.58);
\fill[slotblue!75!white] (4.04,-3.42) rectangle (4.96,-2.58);
\fill[slotblue!58!white] (5.04,-3.42) rectangle (5.96,-2.58);
\fill[slotblue!50!white] (6.04,-3.42) rectangle (6.96,-2.58);
\fill[slotblue!41!white] (7.04,-3.42) rectangle (7.96,-2.58);
\fill[slotblue!25!white] (8.04,-3.42) rectangle (8.96,-2.58);
\fill[slotblue!16!white] (9.04,-3.42) rectangle (9.96,-2.58);
\fill[slotblue!8!white] (10.04,-3.42) rectangle (10.96,-2.58);
\fill[slotblue!33!white] (11.04,-3.42) rectangle (11.96,-2.58);
\node[anchor=east,font=\scriptsize] at (-0.1,-4) {Attn.\ flow};
\fill[slotblue!100!white] (0.04,-4.42) rectangle (0.96,-3.58);
\fill[slotblue!92!white] (1.04,-4.42) rectangle (1.96,-3.58);
\fill[slotblue!83!white] (2.04,-4.42) rectangle (2.96,-3.58);
\fill[slotblue!75!white] (3.04,-4.42) rectangle (3.96,-3.58);
\fill[slotblue!67!white] (4.04,-4.42) rectangle (4.96,-3.58);
\fill[slotblue!58!white] (5.04,-4.42) rectangle (5.96,-3.58);
\fill[slotblue!50!white] (6.04,-4.42) rectangle (6.96,-3.58);
\fill[slotblue!41!white] (7.04,-4.42) rectangle (7.96,-3.58);
\fill[slotblue!33!white] (8.04,-4.42) rectangle (8.96,-3.58);
\fill[slotblue!25!white] (9.04,-4.42) rectangle (9.96,-3.58);
\fill[slotblue!16!white] (10.04,-4.42) rectangle (10.96,-3.58);
\fill[slotblue!8!white] (11.04,-4.42) rectangle (11.96,-3.58);
\node[anchor=east,font=\scriptsize] at (-0.1,-5) {Eff.\ resist.};
\fill[slotblue!100!white] (0.04,-5.42) rectangle (0.96,-4.58);
\fill[slotblue!67!white] (1.04,-5.42) rectangle (1.96,-4.58);
\fill[slotblue!83!white] (2.04,-5.42) rectangle (2.96,-4.58);
\fill[slotblue!92!white] (3.04,-5.42) rectangle (3.96,-4.58);
\fill[slotblue!75!white] (4.04,-5.42) rectangle (4.96,-4.58);
\fill[slotblue!58!white] (5.04,-5.42) rectangle (5.96,-4.58);
\fill[slotblue!50!white] (6.04,-5.42) rectangle (6.96,-4.58);
\fill[slotblue!25!white] (7.04,-5.42) rectangle (7.96,-4.58);
\fill[slotblue!33!white] (8.04,-5.42) rectangle (8.96,-4.58);
\fill[slotblue!41!white] (9.04,-5.42) rectangle (9.96,-4.58);
\fill[slotblue!8!white] (10.04,-5.42) rectangle (10.96,-4.58);
\fill[slotblue!16!white] (11.04,-5.42) rectangle (11.96,-4.58);
\node[anchor=north east,rotate=50,font=\scriptsize] at (0.6,-5.55) {a\textsubscript{26}};
\node[anchor=north east,rotate=50,font=\scriptsize] at (1.6,-5.55) {boat\textsubscript{29}};
\node[anchor=north east,rotate=50,font=\scriptsize] at (2.6,-5.55) {the\textsubscript{54}};
\node[anchor=north east,rotate=50,font=\scriptsize] at (3.6,-5.55) {the\textsubscript{60}};
\node[anchor=north east,rotate=50,font=\scriptsize] at (4.6,-5.55) {workings\textsubscript{61}};
\node[anchor=north east,rotate=50,font=\scriptsize\bfseries] at (5.6,-5.55) {boat\textsubscript{64}};
\node[anchor=north east,rotate=50,font=\scriptsize] at (6.6,-5.55) {trained\textsubscript{80}};
\node[anchor=north east,rotate=50,font=\scriptsize\bfseries] at (7.6,-5.55) {monkey\textsubscript{81}};
\node[anchor=north east,rotate=50,font=\scriptsize] at (8.6,-5.55) {type\textsubscript{99}};
\node[anchor=north east,rotate=50,font=\scriptsize] at (9.6,-5.55) {of\textsubscript{127}};
\node[anchor=north east,rotate=50,font=\scriptsize] at (10.6,-5.55) {programs\textsubscript{133}};
\node[anchor=north east,rotate=50,font=\scriptsize] at (11.6,-5.55) {Tesla\textsubscript{135}};
\end{tikzpicture}
\caption{A representative SQuAD question for Qwen3-1.7B (dataset H; \val{exs-tokens} tokens): ``\val{exs-question}'' (answer: \emph{\val{exs-answer}}). Columns are the 12 scored words in position order (subscript: position; the answer word and its question words in bold); darker cells mark tokens ranked as more influential within the prompt.}
\label{fig:squad}
\end{figure}

\subsection{Accuracy and cost}
\label{sec:cost}
Fig.~\ref{fig:cost} compares accuracy and cost on the \val{cost-n} SQuAD prompts. To compare like with like, accuracy is here the Spearman correlation with the effect of a single replacement, which a second, different replacement predicts at \val{cost-acc-rep}. Measuring such effects for every token takes one forward pass per token: \val{cost-rep-s-H}\,s per prompt for Qwen3-1.7B and \val{cost-rep-s-I}\,s for Llama in our unbatched FP32 runs on an RTX 5080 GPU. \rift{} reaches \val{cost-acc-rift}, \val{cost-ratio}\% of that agreement, without any extra forward pass: one sweep of Algorithm~\ref{alg:plr} over the attention weights scores every token in \val{cost-rift-ms}\,ms (median; \val{cost-rift-min-ms}--\val{cost-rift-max-ms}\,ms across all \val{n-conf-word} datasets) on one thread of an AMD Ryzen 9 9900X CPU, \val{speedup-rep-min} to \val{speedup-rep-max} times faster than our unbatched replacement runs, and its scores match the per-source computation to machine precision. Rollout is faster still (\val{cost-roll-ms}\,ms) but far less accurate (\val{cost-acc-roll}). Attention flow and effective resistance must be solved once per source, which takes about \val{cost-flow-s} and \val{cost-res-s}\,s per prompt with our solvers (per-source time multiplied by the number of tokens); pruning can speed up flow~\cite{xu2025flowprune}, but both scores are barely related to the labels (\val{cost-acc-flow} and \val{cost-acc-res}).

\begin{figure}[t]
\centering
\begin{tikzpicture}
\begin{axis}[
  width=\columnwidth, height=4.3cm, xmode=log, xmin=1e-6, xmax=1e4, ymin=-0.25, ymax=0.85,
  xtick={1e-6,1e-4,1e-2,1,1e2,1e4},
  xlabel={\footnotesize Time to score every token of a prompt (s)},
  ylabel={\footnotesize Spearman with one replacement},
  xticklabel style={font=\scriptsize}, yticklabel style={font=\scriptsize},
  axis line style={inkmuted,line width=0.4pt}, tick style={inkmuted},
  ymajorgrids, grid style={gridgray,line width=0.3pt},
  extra y ticks={0}, extra y tick style={grid style={inkmuted,line width=0.5pt}},
  clip=false]
\addplot[only marks,mark=triangle*,mark size=2.8pt,color=slotorange] coordinates {(2.92601e-06,-0.1241)};
\node[right,font=\scriptsize,inner sep=2.5pt] at (axis cs:2.92601e-06,-0.1241) {distance};
\addplot[only marks,mark=square*,mark size=2.2pt,color=slotaqua] coordinates {(9.80095e-05,0.2882)};
\node[right,font=\scriptsize,inner sep=2.5pt] at (axis cs:9.80095e-05,0.2882) {rollout};
\addplot[only marks,mark=*,mark size=3.0pt,color=slotblue] coordinates {(0.00134205,0.6161)};
\node[left,font=\scriptsize,inner sep=2.5pt] at (axis cs:0.00134205,0.6161) {\textbf{RIFT}};
\addplot[only marks,mark=star,mark size=3.6pt,color=black,line width=0.9pt] coordinates {(7.60822,0.6532)};
\node[above,font=\scriptsize,inner sep=2.5pt] at (axis cs:7.60822,0.6532) {replacement experiment};
\addplot[only marks,mark=pentagon*,mark size=2.6pt,color=slotmagenta] coordinates {(55.5759,0.1122)};
\node[below left,font=\scriptsize,inner sep=2.5pt] at (axis cs:55.5759,0.1122) {effective resistance};
\addplot[only marks,mark=diamond,mark size=3.4pt,color=slotyellow,line width=0.9pt] coordinates {(73.0974,0.1197)};
\node[above left,font=\scriptsize,inner sep=2.5pt] at (axis cs:73.0974,0.1197) {attention flow};
\end{axis}
\end{tikzpicture}
\caption{Accuracy versus cost on the SQuAD prompts (datasets H and I). Accuracy: Spearman correlation with the effect of one replacement (for the replacement experiment, the agreement between two replacements). Cost: median time to score every token of a prompt. \rift{}, rollout and distance are timed on one CPU thread, attention flow and effective resistance from their per-source solver times, and the replacement experiment from its per-pass GPU time (one forward pass per token).}
\label{fig:cost}
\end{figure}

\subsection{Comparison with other attention-based scores}
\label{sec:closest}
Because the best route is usually a single jump (Section~\ref{sec:route}), one may ask whether a simpler read of the attention graph suffices. Table~\ref{tab:closest} compares \rift{} with the strongest attention from the answer position to a token (maximum over layers), the widest path, and a multiplicative path score that sums $-\log c$ along a route (a maximum-probability path, as in~\cite{trevino2024mla}). These scores were added after confirmation, on the same records; they are diagnostics, not part of the decision rule. \rift{} is the best or tied-best score in every dataset. The widest path, its single-route sibling, ties it in \val{C1-wid-tie-of} datasets and trails by \val{C1-wid-siggap-min} to \val{C1-wid-siggap-max} in the others. The strongest single attention weight trails significantly in \val{C1-max-sig-of} datasets but comes close for the Qwen3 base model (A and D), the mean attention from the answer position over layers (not shown) trails significantly in \val{C1-mean-sig-of}, and the multiplicative path in \val{C1-log-sig-of}.

\begin{table}[t]
\caption{Advantage of \rift{} over other attention-based scores (paired difference in mean Spearman with 95\% intervals; computed after confirmation, diagnostic).}
\label{tab:closest}
\centering
\setlength{\tabcolsep}{2.2pt}
\begin{tabular}{@{}lccc@{}}
\toprule
& $-$\,max.\ attention & $-$\,widest path & $-$\,multiplicative path\\
\midrule
A & \val{A-dmax} [\val{A-dmax-lo}, \val{A-dmax-hi}] & \val{A-dwid} [\val{A-dwid-lo}, \val{A-dwid-hi}] & \val{A-dlog} [\val{A-dlog-lo}, \val{A-dlog-hi}]\\
B & \val{B-dmax} [\val{B-dmax-lo}, \val{B-dmax-hi}] & \val{B-dwid} [\val{B-dwid-lo}, \val{B-dwid-hi}] & \val{B-dlog} [\val{B-dlog-lo}, \val{B-dlog-hi}]\\
C & \val{C-dmax} [\val{C-dmax-lo}, \val{C-dmax-hi}] & \val{C-dwid} [\val{C-dwid-lo}, \val{C-dwid-hi}] & \val{C-dlog} [\val{C-dlog-lo}, \val{C-dlog-hi}]\\
D & \val{D-dmax} [\val{D-dmax-lo}, \val{D-dmax-hi}] & \val{D-dwid} [\val{D-dwid-lo}, \val{D-dwid-hi}] & \val{D-dlog} [\val{D-dlog-lo}, \val{D-dlog-hi}]\\
E & \val{E-dmax} [\val{E-dmax-lo}, \val{E-dmax-hi}] & \val{E-dwid} [\val{E-dwid-lo}, \val{E-dwid-hi}] & \val{E-dlog} [\val{E-dlog-lo}, \val{E-dlog-hi}]\\
F & \val{F-dmax} [\val{F-dmax-lo}, \val{F-dmax-hi}] & \val{F-dwid} [\val{F-dwid-lo}, \val{F-dwid-hi}] & \val{F-dlog} [\val{F-dlog-lo}, \val{F-dlog-hi}]\\
G & \val{G-dmax} [\val{G-dmax-lo}, \val{G-dmax-hi}] & \val{G-dwid} [\val{G-dwid-lo}, \val{G-dwid-hi}] & \val{G-dlog} [\val{G-dlog-lo}, \val{G-dlog-hi}]\\
H & \val{H-dmax} [\val{H-dmax-lo}, \val{H-dmax-hi}] & \val{H-dwid} [\val{H-dwid-lo}, \val{H-dwid-hi}] & \val{H-dlog} [\val{H-dlog-lo}, \val{H-dlog-hi}]\\
I & \val{I-dmax} [\val{I-dmax-lo}, \val{I-dmax-hi}] & \val{I-dwid} [\val{I-dwid-lo}, \val{I-dwid-hi}] & \val{I-dlog} [\val{I-dlog-lo}, \val{I-dlog-hi}]\\
\bottomrule
\end{tabular}
\end{table}

\subsection{Finding the tokens that carry the answer}
A practical use of attribution is to point at the tokens that hold the answer. In the synthetic datasets, treating the relevant record tokens as positives, \rift{} separates them from the rest with AUROC \val{C1-auc-plr-min}--\val{C1-auc-plr-max}. Rollout reaches \val{C1-auc-roll-min}--\val{C1-auc-roll-max}, attention flow \val{C1-auc-flow-min}--\val{C1-auc-flow-max}, and token distance stays near chance (\val{C1-auc-dist-min}--\val{C1-auc-dist-max}). Much of the correlation with the labels comes from this separation: an oracle indicator that knows which tokens belong to the relevant records reaches \val{C1-oracle-min}--\val{C1-oracle-max}. Among the remaining tokens, effects are small and noisy (reliability \val{C1-relnonrel-min}--\val{C1-relnonrel-max}), yet \rift{} still correlates with them at \val{C1-riftnonrel-min}--\val{C1-riftnonrel-max}.

\subsection{All-path aggregates track position}
\label{sec:position}
Table~\ref{tab:position} contrasts how strongly each score follows token position with how well it predicts the labels. Attention flow is a perfect function of position in exploration ($\rho=\val{Mb-pos-flow}$ with inverse distance in both models) and in \val{Q-flow-exact} of the \val{Q-flow-n} Qwen3 confirmation prompts; in Llama its correlation with position is \val{L-pos-flow-min} to \val{L-pos-flow-max}. Effective resistance follows position almost as closely, whether its conductances come from attention or from local Jacobians. The structural null of Section~\ref{sec:theory} reproduces the ranking of the Jacobian-based effective resistance with $\rho=\val{Mb-null-rift}$ and $\val{Mi-null-rift}$ while being unrelated to the labels. Rollout is strongly position-driven as well. \rift{} and the widest path are only weakly related to position and are the only scores in the table that predict the labels well. In the confirmation data, effective resistance correlates with the labels at only \val{C1-kirch-min} to \val{C1-kirch-max}.

\begin{table}[t]
\caption{Position dependence versus validity (exploration, \val{Mb-n} prompts per 1.7B model, base\,/\,instruct). Position: Spearman with inverse distance to the answer; label: Spearman with the replacement label. Structural null: effective resistance with every conductance set to the same constant.}
\label{tab:position}
\centering
\setlength{\tabcolsep}{3.5pt}
\begin{tabular}{@{}lcc@{}}
\toprule
Score & Position $\rho$ & Label $\rho$\\
\midrule
Token distance & 1.00\,/\,1.00 & \val{Mb-dist}\,/\,\val{Mi-dist}\\
Attention rollout & \val{Mb-pos-roll}\,/\,\val{Mi-pos-roll} & \val{Mb-roll}\,/\,\val{Mi-roll}\\
Attention flow & \val{Mb-pos-flow}\,/\,\val{Mi-pos-flow} & \val{Mb-flow}\,/\,\val{Mi-flow}\\
Eff.\ resistance, attention & \val{Mb-pos-kirch}\,/\,\val{Mi-pos-kirch} & \val{Mb-kirch}\,/\,\val{Mi-kirch}\\
Eff.\ resistance, Jacobian & \val{Mb-pos-rift}\,/\,\val{Mi-pos-rift} & \val{Mb-kirchjac}\,/\,\val{Mi-kirchjac}\\
Structural null & \val{M1-null-pos}\,/\,\val{Mi-null-pos} & \val{Mb-null-lab}\,/\,\val{Mi-null-lab}\\
\midrule
Widest path & \val{Mb-pos-widest}\,/\,\val{Mi-pos-widest} & \val{Mb-widest}\,/\,\val{Mi-widest}\\
\rift{} (ours) & \val{Mb-pos-plr}\,/\,\val{Mi-pos-plr} & \textbf{\val{Mb-plr}}\,/\,\textbf{\val{Mi-plr}}\\
\bottomrule
\end{tabular}
\end{table}

\subsection{Depth and density}
Fig.~\ref{fig:depth} truncates the attention graph after its first $k$ layers and measures how strongly each score follows position. Attention flow and effective resistance turn position-driven almost immediately (\val{d3-kirch-k2} at $k=2$ and \val{d3-kirch-k4} at $k=4$ for effective resistance) and approach $-1$ with depth. Rollout drifts more slowly, from \val{d3-roll-k2} at $k=2$ to \val{d3-roll-k28} at full depth. \rift{} remains positive at all depths (\val{d3-plr-k28} at $k=28$). Depth is not the whole story. In a 4-layer Transformer that we trained from scratch on similar synthetic retrieval tasks in a pilot study, effective resistance on its Jacobian graph was barely position-driven (\val{toy-jac-pos-kirch}), whereas on its attention graph it already was (\val{toy-att-pos-kirch}). What matters is how widely each token spreads its weight, which determines how many parallel routes carry real weight; depth then multiplies these routes.

\begin{figure}[t]
\centering
\begin{tikzpicture}
\begin{axis}[
  width=\columnwidth, height=4.8cm,
  xmin=0, xmax=29, ymin=-1.08, ymax=0.85,
  xtick={1,2,4,7,14,21,28},
  xlabel={\footnotesize Graph depth $k$ (layers)},
  ylabel={\footnotesize Spearman with position},
  xticklabel style={font=\scriptsize}, yticklabel style={font=\scriptsize},
  axis line style={inkmuted,line width=0.4pt}, tick style={inkmuted},
  ymajorgrids, grid style={gridgray,line width=0.3pt},
  extra y ticks={0}, extra y tick style={grid style={inkmuted,line width=0.5pt}},
  legend style={font=\scriptsize,draw=none,fill=none,at={(0.5,1.02)},anchor=south,legend columns=3,/tikz/every even column/.append style={column sep=4pt}},
  unbounded coords=jump,
  every axis plot/.append style={line width=1.1pt,mark size=1.9pt},
]
\addplot[color=slotblue,mark=*] table[x=k,y=plr] {data_depth.dat};
\addlegendentry{\rift{}}
\addplot[color=slotviolet,mark=diamond*,mark size=2.3pt] table[x=k,y=widest] {data_depth.dat};
\addlegendentry{widest path}
\addplot[color=slotaqua,mark=square*] table[x=k,y=rollout] {data_depth.dat};
\addlegendentry{rollout}
\addplot[color=slotyellow,mark=diamond,line width=0.7pt,mark size=2.6pt] table[x=k,y=maxflow] {data_depth.dat};
\addlegendentry{attention flow}
\addplot[color=slotmagenta,mark=pentagon*] table[x=k,y=kirchhoff] {data_depth.dat};
\addlegendentry{effective resistance}
\end{axis}
\end{tikzpicture}
\caption{Position dependence as the attention graph deepens (mean of the two 1.7B models). All-path aggregates become functions of position within a few layers; attention flow (hollow diamonds) and effective resistance nearly coincide. \rift{} does not. Negative values mean that earlier tokens score higher.}
\label{fig:depth}
\end{figure}

\subsection{The best route points to a candidate read-out layer}
\label{sec:route}
Because Algorithm~\ref{alg:plr} also returns the best route, we can ask where it moves between positions. For the source token that holds the answer (the answer letter, or the answer word in SQuAD), the best route is a single jump in \val{route-single-min}--\val{route-single-max}\% of the \val{route-n} confirmation prompts: it stays on the source's residual stream, moves once to the answer position, and stays there. On the synthetic tasks, the Qwen3 jumps concentrate at one depth, layer \val{route-layer-17} of 28 in both 1.7B models and layer \val{route-layer-4b} of 36 in the 4B model: \val{route-q-mode-pct}\% of the \val{route-q-n} prompts jump there and \val{route-q-near-pct}\% within one layer of it. In Llama the most common layer is \val{route-layer-llama} of 28 (\val{route-llama-mode-pct}\% of prompts; interquartile range up to layer \val{route-layer-llama-hi}). On natural text the depth varies more, with medians at layer \val{route-layer-H} (Qwen3) and \val{route-layer-I} (Llama). This suggests a preferred layer at which the answer position reads the retrieved value, in line with reports of dedicated retrieval heads~\cite{wu2024retrievalhead}. We have not verified this layer causally; \rift{} provides it as a hypothesis at no extra cost.

\subsection{Robustness}
\label{sec:robust}
\emph{Other labels.} For the signed effect of a replacement, \rift{} leads in \val{C1-sgn-wins-of} datasets (\val{C1-sgn-plr-min}--\val{C1-sgn-plr-max} against at most \val{C1-sgn-best-max} for the baselines). The two gradient-based labels, recorded for the synthetic datasets, measure infinitesimal sensitivity rather than the effect of an actual replacement. On these, \rift{} leads in \val{C1-jac-wins-of} and \val{C1-mg-wins-of} datasets, respectively, and rollout is on par or ahead for the instruction-tuned 1.7B and 4B models (datasets B and C).
\emph{Conductances.} On the synthetic datasets, \rift{} works about as well with value-norm conductances (\val{C1-plrval-min}--\val{C1-plrval-max}) and, for the 1.7B models (A and B), with local-Jacobian conductances (\val{A-plrjac} and \val{B-plrjac}); on natural text, value norms are slightly worse (\val{H-plrval} and \val{I-plrval}, versus \val{H-plr} and \val{I-plr}). The plain attention graph therefore suffices, and no gradients are needed.
\emph{Gradients.} On the synthetic datasets, gradient$\times$input, which uses the answer's gradient, reaches only \val{C1-gxi-min}--\val{C1-gxi-max}.

\section{Discussion and Limitations}

\textbf{Practical takeaways.} For decoder LLMs, at least in the settings we test, single-route scores (\rift{} or the widest path) are a better default than attention rollout or attention flow for token attribution: \rift{} costs milliseconds, like rollout, and comes close to the agreement of a replacement experiment that needs a forward pass per token. All-path aggregates mainly report how early a token is. Resistance remains a useful lens, but empirically the informative notion is the series resistance of the best route, not the parallel combination of all routes; why it predicts influence so well (Section~\ref{sec:theory}) remains a hypothesis.

\textbf{Limitations.} Our contexts are short (at most \val{all-len-max} tokens), most prompts (\val{S-total-prompts} of \val{C1-total-prompts}) are synthetic, and every answer is a single token: a letter in the synthetic tasks and a copied word in SQuAD, where we score only questions the model answers correctly; the models come from two families (Qwen3 and Llama 3.2) and have at most 4B parameters; we score influence on the last position only, through single-token replacements, and the scored sources over-represent the relevant records or words. We test the standard, unnormalized rollout and flow, not position-normalized variants~\cite{abnar2020flow,metzger2022general}. Like rollout, \rift{} needs the attention weights of every layer ($L$ matrices of size $T\times T$), which fused attention kernels do not return, and our timings come from a simple NumPy implementation and unbatched forward passes. Longer contexts, multi-token and generated answers, larger models and further model families are the next steps. Natural applications include attributing answers to retrieved passages in retrieval-augmented systems and guiding context compression.

\section{Conclusion}
Viewing a Transformer as a resistor network clarifies how to read token influence from attention. Scores that add up all routes, including attention rollout, attention flow and effective resistance, are dominated by the abundance of parallel routes from early tokens, and so mostly encode position. \rift{}, which follows the path of least resistance, avoids this bias. On held-out synthetic and natural-text data from four models of two families, it predicts the causal effect of replacing a token nearly as well as a second, different replacement does, through a classic shortest-path search over the attention weights of a single forward pass that takes a few milliseconds.

\section*{Acknowledgment}
We disclose that experiment code and analysis scripts were written with the help of an AI coding agent (OpenAI Codex), and that the manuscript text, figures and tables were drafted with the help of an AI assistant (Anthropic Claude). The authors designed the study, directed the experiments, verified all reported numbers against the experiment records, and take full responsibility for the content.

\bibliographystyle{IEEEtranLink}
\bibliography{refs}

\end{document}